\newcolumntype{x}[1]{>{\centering\let\newline\\\arraybackslash\hspace{0pt}}p{#1}}  % for table stuff
\DeclareMathOperator*{\argmin}{arg\,min}
\DeclarePairedDelimiter\norm{\lVert}{\rVert}
\theoremstyle{definition}
\newtheorem{definition}{Definition}[section]
\newtheorem{proposition}{Proposition}
\newtheorem{theorem}{Theorem}
\newtheorem{problem}{Problem}
\newtheorem{remark}{Remark}
\begin{document}

% paper title
\title{Toward Verifiable Real-Time Obstacle Motion Prediction for Dynamic Collision Avoidance}

% Authors
\author{Vince Kurtz and Hai Lin

\thanks{The partial support of the National Science Foundation (Grant No.
ECCS-1253488, IIS-1724070, CNS-1830335) and of the Army Research
Laboratory (Grant No. W911NF- 17-1-0072) is gratefully acknowledged.}
\thanks{Vince Kurtz and Hai Lin are with the Department of Electrical Engineering,
University of Notre Dame, Notre Dame, IN, 46556 USA. \texttt{\url{vkurtz@nd.edu}},\texttt{
\url{hlin1@nd.edu}}}
}

\maketitle

\begin{abstract}
Next generation Unmanned Aerial Vehicles (UAVs) must reliably avoid moving obstacles. Existing dynamic collision avoidance methods are effective where obstacle trajectories are linear or known, but such restrictions are not accurate to many real-world UAV applications. We propose an efficient method of predicting an obstacle's motion based only on recent observations, via online training of an LSTM neural network. Given such predictions, we define a Nonlinear Probabilistic Velocity Obstacle (NPVO), which can be used select a velocity that is collision free with a given probability. We take a step towards formal verification of our approach, using statistical model checking to approximate the probability that our system will mispredict an obstacle's motion. Given such a probability, we prove upper bounds on the probability of collision in multi-agent and reciprocal collision avoidance scenarios. Furthermore, we demonstrate in simulation that our method avoids collisions where state-of-the-art methods fail.
\end{abstract}

\IEEEpeerreviewmaketitle

\section{Introduction and Related Work}
To enable future UAV applications like disaster response, infrastructure inspection, and package delivery, autonomous UAVs must avoid collisions with moving obstacles. Such obstacles might include hobbyist drones, aircraft with limited maneuverability, and other UAVs using the same collision avoidance policy. The motion of these obstacles may not be known a priori, and they will not reliably continue at their current velocity. However there is often some structure underlying each obstacle's movement such that we might predict future motion by observing past behavior. A simple example of such motion is shown in Figure \ref{fig:motion}. 

We propose a novel algorithm to predict the motion of moving obstacles via online training of an LSTM Recurrent Neural Network (RNN) \cite{hochreiter1997long}, using dropout to obtain uncertainty estimates over these predictions \cite{gal2016dropout,gal2016theoretically}. To our knowledge, this is the first work proposing an online obstacle motion prediction system for collision avoidance without a priori environmental knowledge or extensive offline training.

Fergusen et al \cite{ferguson2008detection} show that environmental structure can be used to make predictions about obstacle motion for autonomous vehicles. In less structured environments, Hug et al \cite{pedestrian2018} show that LSTM neural networks can predict pedestrian paths from extensive training data. They note that LSTM is well-suited to time-varying patterns with both short and long-term dependencies. We extend this work by learning patterns of obstacle motion online rather than from a large training set. Furthermore, we use dropout sampling to approximate a probability distribution such that the variance of this distribution can be used as an uncertainty estimate \cite{kahn2017uncertainty}. 

Given probabilistic predictions of obstacle movement, we propose an uncertainty-aware multi-agent dynamic collision avoidance algorithm based on Nonlinear Probabilistic Velocity Obstacles (NPVO), a novel extension of existing velocity obstacle notions. These include Probabilistic Velocity Obstacles, which provide an uncertainty-aware policy in static environments \cite{fulgenzi2007dynamic}, and Nonlinear Velocity Obstacles \cite{shiller2001motion}, which can guarantee collision avoidance for obstacles moving along known trajectories. Our NPVO, which considers the future behavior of an obstacle in a probabilistic sense, is a generalization of these notions.

The state-of-the-art ``Optimal Reciprocal Collision Avoidance" (ORCA) algorithm uses reciprocal velocity obstacles to control multiple agents in unstructured environments \cite{van2011reciprocal}. This approach is popular due to its ease of implementation and guarantees that a collision-free trajectory will be found for $\tau$ time, if one is available. However, ORCA assumes that all obstacles in the workspace are either static or operating according to the same policy. Violations of this assumption can lead to catastrophic behavior, as shown in Figure \ref{fig:orca_fail}. We demonstrate in simulation that our NPVO approach is able to avoid such obstacles, and prove bounds on our algorithm's safety in reciprocal and multi-agent scenarios.

\begin{figure}
    \centering
    \begin{subfigure}{0.18\textwidth}
        \centering
        \includegraphics[width=\textwidth]{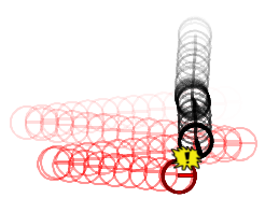}
        \caption{ORCA \cite{van2011reciprocal}}
        \label{fig:orca_fail}
    \end{subfigure}
    \begin{subfigure}{0.18\textwidth}
        \centering
        \includegraphics[width=\textwidth]{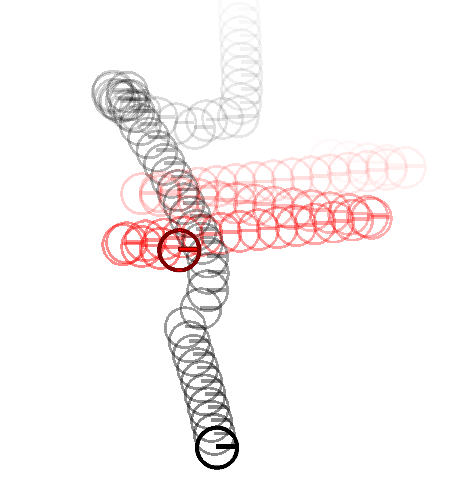}
        \caption{Ours}
        \label{fig:our_success}
    \end{subfigure}
    \caption{An obstacle (red) moves back and forth while slowly drifting downwards. Existing methods fail to avoid such an obstacle, while our system accurately predicts its motion and avoids a collision. }
    \label{fig:motion}
\end{figure}

Finally, we take an important step towards rigorous verification of our framework. Existing results for formal verification of systems based on techniques like LSTM are highly limited \cite{carlini2017towards,huang2017safety,katz2017reluplex}, but formal guarantees are of vital importance for safety-critical applications like collision avoidance. We propose a novel statistical model checking formulation to approximate the probability that an obstacle will remain within certain bounds. Along the way, we demonstrate that predictions generated by our algorithm are robust to perception uncertainty in the form of additive Gaussian noise. 

Given the probability that obstacles will remain within the bounds of our predictions (which we approximate with statistical model checking), we prove several important safety properties of our algorithms. First, we prove a bound on the probability of collision between one agent and one (moving) obstacle. Next, we prove a tighter bound on the probability of collision for two agents reciprocally avoiding collisions. We use these results to prove bounds on the probability of collision for one agent avoiding $N$ obstacles, as well as on the probability of any collision occuring between $N$ agents using our proposed approach for reciprocal collision avoidance.

The remainder of this paper is organized as follows: Section \ref{sec:problem_formulation} formulates the multi-agent collision avoidance problem.
, while Section \ref{sec:nn} provides some necessary background and notation regarding RNNs and dropout. 
Sections \ref{sec:pred} and \ref{sec:npvo} describe our obstacle motion prediction and NPVO collision avoidance algorithms. Section \ref{sec:stat_mc} introduces a novel method for approximating the probability that our prediction algorithm will make a faulty prediction. Given such a probability, we prove several important safety properties of our collision avoidance algorithms in Section \ref{sec:proof}. We present simulation results in Section \ref{sec:simulation} and conclude the paper with Section \ref{sec:conclusion}. 

\section{Problem Formulation}\label{sec:problem_formulation}

Consider a robotic agent $A$ with single-integrator dynamics and perfect actuation. That is, the position of agent $A$ at instant $k$, $\mathbf{p}^A_k$, is governed by
\begin{equation}
    \mathbf{p}^A_{k+1} = \mathbf{p}^A_k + \mathbf{v}_k\Delta t
\end{equation}
where $\mathbf{v}$ is a velocity (control input), and $\Delta t$ is a sampling period. Assume there exists some desired velocity $\mathbf{v}_{des}$, provided a priori or by a higher-level planner.

Suppose that agent $A$ operates in a workspace with $N$ dynamic obstacles $B_1, B_2, \dots , B_N$. Each $B_i$ moves according to an unknown and possibly time-varying policy:
\begin{equation}\label{eqn:obs}
    \mathbf{p}^{B_i}_{k+1} = f_i(\mathbf{p}^{B_i}_k,k,\mathbf{u}_k^{B_i})
\end{equation}
where $\mathbf{p}^{B_i}_k$ denotes the position of obstacle $B_i$ at instant $k$, and $\mathbf{u}_k^{B_i}$ denotes some unknown  control input. We assume perfect observability of the position of all $B_i$ for the last $n$ timesteps. That is, at instant $n$,
$$
    \mathcal{O} = \{\mathbf{p}^{B_i}_0,\mathbf{p}^{B_i}_1,...,\mathbf{p}^{B_i}_n\}
$$
is known to agent $A$.

Finally, assume that there exists some safe distance $r_s$ such that for any obstacle $B_i$, if
$$
    \norm{\mathbf{p}^A_k - \mathbf{p}^{B_i}_k}_{\mathcal{L}_2} \geq r_s,
$$
then $A$ and will not collide with $B_i$ at instant $k$. 

The multi-agent dynamic collision avoidance problem can then be stated as follows:

\begin{problem}{\textit{Multi-Agent Dynamic Collision Avoidance}}\label{prob:dca}

    Given a preferred velocity $\mathbf{v}_{des}$ and observations $\mathcal{O} = \{\mathbf{p}^{B_i}_0,\mathbf{p}^{B_i}_1,...,\mathbf{p}^{B_i}_n\}_{i=1}^N$, find a safe velocity $\mathbf{v}_{safe}$ that minimizes the probability of collision for the next $m$ timesteps, which is given by
    \begin{align}\label{eqn:p_coll}
        \mathbb{P}(\norm{\mathbf{p}_{n+k}^A-\mathbf{p}_{n+k}^{B_i}} < r_s \text{ for any } k \in [1,m], i \in [1,N]).
    \end{align}
\end{problem}

\section{Neural Networks}\label{sec:nn}

\subsection{Recurrent Neural Networks for Timeseries Analysis}

Given a sequence of timeseries data $\{\mathbf{x}_1, \mathbf{x}_2, ..., \mathbf{x}_n\}$, the output $\mathbf{y}_i$ of a simple Recurrent Neural Network (RNN) is given by 
$$\mathbf{h}_i = \sigma(\mathbf{W}_h\mathbf{x}_i + \mathbf{U}_h\mathbf{h}_{i-1} + \mathbf{b}_h)$$
$$\mathbf{y}_i = \sigma(\mathbf{W}_y\mathbf{h}_i + \mathbf{b}_y)$$
where $\mathbf{h}_i$ is the ``hidden state'' at step $i$, $\sigma(\cdot)$ is some nonlinear function (typically a rectified linear unit or sigmoid), and $\mathcal{W} = \{\mathbf{W}_h, \mathbf{U}_h, \mathbf{W}_y, \mathbf{b}_h, \mathbf{b}_y\}$ are the weights (and biases) of the network.

Simple RNNs are powerful tools for processing timeseries data, but often struggle on problems with long-term temporal dependencies. The LSTM (Long Short Term Memory) RNN structure \cite{hochreiter1997long} addresses this shortcoming with the addition of cell state $\mathbf{C}_i$ as well as several gates, which regulate how information is passed to the cell state:
$$\mathbf{f}_i = \sigma(\mathbf{W}_f\mathbf{x}_i + \mathbf{U}_f\mathbf{h}_{i-1} + \mathbf{b}_f)$$
$$\mathbf{i}_i = \sigma(\mathbf{W}_\mathbf{i}\mathbf{x}_i + \mathbf{U}_\mathbf{i}\mathbf{h}_{i-1} + \mathbf{b}_\mathbf{i}) $$
$$\mathbf{o}_i = \sigma(\mathbf{W}_o\mathbf{x}_i + \mathbf{U}_o\mathbf{h}_{i-1} + \mathbf{b}_o)$$
$$\mathbf{\Tilde{C}}_i = tanh(\mathbf{W}_C\mathbf{x}_i + \mathbf{U}_C\mathbf{h}_{i-1} + \mathbf{b}_C)$$
$$\mathbf{C}_i = \mathbf{f}_i \odot \mathbf{C}_{i-1} + \mathbf{i}_i \odot \mathbf{\Tilde{C}_{i}}$$
$$\mathbf{y}_i = \mathbf{h}_i = \mathbf{o}_i \odot tanh(\mathbf{C}_i).$$
Here $\odot$ denotes the Hadamard product, $tanh(\cdot)$ the hyperbolic tangent function, the nonlinear activation $\sigma(\cdot)$ is usually the sigmoid function, and $\mathcal{W} = \{ \mathbf{W}_f, \mathbf{U}_f, \mathbf{b}_f, \mathbf{W_i}, \mathbf{U_i}, \mathbf{b_i}, \mathbf{W}_o, \mathbf{U}_o, \mathbf{b}_o, \mathbf{W}_C, \mathbf{U}_C, \mathbf{b}_C\}$ are the weights and biases of the network. 

Throughout this paper, we use the shorthand notation $$\mathbf{y} = \mathcal{NN}(\mathbf{x} ; \mathcal{W})$$ to indicate the output $\mathbf{y} = \mathbf{y}_n$ from passing input $\mathbf{x} = \{ \mathbf{x}_i \}_{i=1}^n$ through a general network $\mathcal{NN}$ parameterized by weights $\mathcal{W}$.

\subsection{Dropout for Uncertainty Quantification}\label{sec:dropout}

Dropout was originally proposed as a technique for regularizing the output of neural networks. The basic idea is to set individual weights from $\mathcal{W}$ to zero at random. Applying dropout during the training process ensures that no individual weight is relied upon too much, which can cause overfitting. 

For a neural network with input $\mathbf{x} \in \mathbb{R}^n$, for example, we can apply dropout by considering the new input
$$
\mathbf{\hat{x}} = \mathbf{z} \odot \mathbf{x}
$$
where $\mathbf{z} \in \mathbb{R}^n$ and the $i^{th}$ element of $\mathbf{z}$ is sampled from $Bernoulli(p)$.

Gal et al. showed in \cite{gal2016dropout} that in addition to providing regularization, dropout can be used to approximate a Bayesian Neural Network, in which weights and outputs are treated as probability distributions. Futhermore, they show in \cite{gal2016theoretically} that this result is applicable to recurrent networks as well, with the caveat that the same dropout mask should be applied at each step, and to both input and the hidden state $\mathbf{h}$. That is, for a recurrent network (including an LSTM network) with input data $\mathbf{x}_i$ and hidden state $\mathbf{h}_i$, we use a new input and a new hidden state defined by 
$$
\mathbf{\hat{x}}_i = \mathbf{z}_x \odot \mathbf{x}_i
$$
$$
\mathbf{\hat{h}}_i = \mathbf{z}_h \odot \mathbf{x}_i
$$
where $\mathbf{z}_h$ and $\mathbf{z}_x$ are vectors of $0$s and $1$s as above. This is different from most use cases of dropout in recurrent networks for regularization, where the dropout mask change at each timestep:
$$
\mathbf{\hat{x}}_i = \mathbf{z}_{x_i} \odot \mathbf{x}_i
$$
$$
\mathbf{\hat{h}}_i = \mathbf{z}_{h_i} \odot \mathbf{x}_i.
$$

Using dropout to approximate a Bayesian Neural Network allows us to consider the model uncertainty inherent in any predictions. This approach has been used with great effectiveness for uncertainty-aware reinforcement learning \cite{kahn2017uncertainty}. To our knowledge, however, this paper presents the first application of dropout for uncertainty estimation in obstacle motion prediction for collision avoidance. 

\section{Online Prediction of Obstacle Motion}\label{sec:pred}

To solve Problem \ref{prob:dca}, we first propose an algorithm to predict the motion of moving obstacles based on past observations. Our algorithm provides predictions as \textit{probability distributions} over future obstacle positions, which will later allow us to estimate and minimize the probability of collision (Equation (\ref{eqn:p_coll})). In this section, we consider the motion of a single obstacle $B_i$. For simplicity, we denote this obstacle's position at instant $k$ as $\mathbf{p}_k^{B_i} = \mathbf{p}_k$. To predict the motion of several obstacles, separate instances of the algorithms described in this section can be used. 

Given a set of observations $\mathcal{O} = \{ \mathbf{p}_0,\mathbf{p}_1, \dots, \mathbf{p}_n \}$, we approximate equation \ref{eqn:obs} by
\begin{equation}
    \mathbf{p}^{B_i}_{n+1} = g(\mathcal{O})
\end{equation}
and approximate $g(\mathcal{O})$ online using an LSTM network \cite{hochreiter1997long}.

First, we calculate a training set of known inputs and outputs. Observations are transformed from positions to changes in position, so that the input to the network is invariant with respect to shifts in the workspace. That is, we calculate $\hat{\mathcal{O}} = \{ \Delta\mathbf{p}_1, \Delta\mathbf{p}_2, ... , \Delta\mathbf{p}_n \}$
where $\Delta\mathbf{p}_k = \mathbf{p}_{k}-\mathbf{p}_{k-1}$.
We then consider the first $k$ changes in position 
$$
\{\Delta\mathbf{p}_1,\Delta\mathbf{p}_2,\dots,\Delta\mathbf{p}_k\}
$$
as an input datapoint and the subsequent $m$ changes in position
$$
\{\Delta\mathbf{p}_k,\Delta\mathbf{p}_{k+1},\dots,\Delta\mathbf{p}_{k+m}\}
$$
to be the corresponding output. To model perception uncertainty, we perturb each $\Delta\mathbf{p}$ with zero-mean Gaussian noise with variance $\sigma^2$. Changing $\sigma^2$ will allow us to verify the robustness of our system to perception uncertainty (see Section \ref{sec:stat_mc}). The complete training dataset at timestep $n$, $d_n = (\mathbf{X}, \mathbf{Y})$, is then specified as follows:
$$
\mathbf{X} = \{ \mathbf{x}_k \}_{k=1}^{n-m}, ~~ \mathbf{Y} = \{ \mathbf{y}_k \}_{k=1}^{n-m}
$$
where
$$
\mathbf{x}_k = \{ \Delta\mathbf{p}_i + \epsilon_i \}_{i=1}^k, ~~ \epsilon_i \sim G(0, \sigma^2)
$$
$$
\mathbf{y}_k = \{ \Delta\mathbf{p}_i + \epsilon_i \}_{i=k+1}^{k+m}, ~~ \epsilon_i \sim G(0, \sigma^2).
$$

For a given input $\mathbf{x}_k$, we denote the output of the neural network parameterized by weights $\mathcal{W}$ as
$$
\hat{\mathbf{y}}_k = \mathcal{NN}(\mathbf{x}_k ; \mathcal{W}).
$$
We then find weights to minimize minimize the cost function 
$$
C_k(\mathcal{W}) = L_\delta[\mathbf{y}_k - \mathbf{\hat{y}}_k]
$$
using a fixed number of iterations ($N_{iter}$) of the stochastic gradient descent algorithm Adam \cite{kingma2014adam}. $L_\delta[\cdot]$ indicates the Huber norm \cite{huber1964robust} with parameter $\delta$. This process is summarized in Algorithm \ref{alg:train}.

\begin{algorithm}
    \caption{TrainNetworkOnline}\label{alg:train}
    \begin{algorithmic}[1]
        \Procedure{TrainNetwork}{$\{ \Delta\mathbf{p}_1, ..., \Delta\mathbf{p}_{n} \}$}
        \For{$k = [1, n-m]$}        
            \State $\mathbf{x}_k = \{ \Delta\mathbf{p}_i + \epsilon_i\}_{i=1}^{k}, ~~~ \epsilon_i \sim G(0,\sigma^2)$
            \vspace{0.5em} 
            \State $\mathbf{y}_k = \{ \Delta\mathbf{p}_{i} + \epsilon_i \}_{i=k+1}^{k+m}, ~~~ \epsilon_i \sim G(0,\sigma^2)$
            \vspace{0.5em} 
            \State $\hat{\mathbf{y}}_k = \mathcal{NN}(\mathbf{x_k}; \mathcal{W})$
            \State $C_k(\mathcal{W}) = L_{\delta}[\mathbf{y}_k - \hat{\mathbf{y}}_k]$
        \EndFor
        \State $\mathcal{W}^* = \argmin \sum_{k}{C_k(\mathcal{W})}$
        \vspace{0.5em} 
        
        \Return $\mathcal{NN}(\cdot;\mathcal{W}^*)$
        \EndProcedure
    \end{algorithmic}
\end{algorithm}

The second part of our online learning approach is prediction, which is outlined in Algorithm \ref{alg:ellipse}. We consider the whole history of observed position changes, perturbed by perception noise, as input to a neural network with weights $\mathcal{W}$:
$$
\mathbf{x} = \{\Delta\mathbf{p}_i + \epsilon_i \}_{i=1}^n ~~ \epsilon_i \sim G(0, \sigma^2).
$$
Applying dropout as per \cite{gal2016theoretically}, we can treat each output of the network as a prediction of changes in position for the next $m$ timesteps:
$$
\{ \Delta\mathbf{\hat{p}}_{n+1}, \Delta\mathbf{\hat{p}}_{n+2}, \dots , \Delta\mathbf{\hat{p}}_{n+m} \} = \mathcal{NN}(\mathbf{x}).
$$

With repeated application of dropout, we can construct a set of predictions
$$
\hat{Y} = \{ \Delta\mathbf{\hat{p}}_{n+1}^i,\Delta\mathbf{\hat{p}}_{n+2}^i, \dots , \Delta\mathbf{\hat{p}}_{n+m}^i \}_{i=1}^{N_{s}}.
$$
Note that $\hat{Y}$ contains $N_s$ predictions for each timestep, $\{ \Delta\mathbf{\hat{p}}_{n+k}^i \}_{i=1}^{N_s}$. We consider these predictions to be samples from an underlying multivariate Gaussian distribution $G(\mu_k, \Sigma_k)$. This distribution represents the probability that a certain motion, $\Delta\mathbf{p}_k$, will be taken by the obstacle at instant $k$. We note that our approach could be extended in a straightforward way to other (possibly multimodal) distributions, but we focus here on a unimodal Gaussian for ease of illustration. 

Given sample predictions $\{ \Delta\mathbf{\hat{p}}_{n+k}^i \}_{i=1}^{N_s}$, we calculate the maximum likelihood estimates of $\mu_k$ and $\Sigma_k$:
$$
\hat{\mu}_k = \frac{1}{N_s}\sum_{i=1}^{N_s} \Delta\mathbf{\hat{p}}_{n+k}^i
$$
$$
\hat{\Sigma}_k = \frac{1}{N_s}\sum_{i=1}^{N_s} (\Delta\mathbf{\hat{p}}_{n+k}^i - \hat{\mu}_k)(\Delta\mathbf{\hat{p}}_{n+k}^i - \hat{\mu}_k)^T.
$$
This allows us to estimate the position of the obstacle at instant $n+k$ as
$$
\hat{\mathbf{p}}_{n+k} = \sum_{i=n+1}^{n+k}\mu_i + \mathbf{p}_n.
$$

We can then write a new set of distributions that reflect the position of the obstacle in the future. Writing the position of the obstacle at time $n+k$ as a random variable $\mathbf{P}_{n+k}$, we have
$$
\mathbf{P}_{n+k} \sim G(\hat{\mathbf{p}}_{n+k}, \Sigma_k).
$$
Then, given some threshold $\gamma$, we construct ellipsoids $e_k$ in position space such that
$$
\mathbb{P}(\mathbf{P}_{n+k} \in e_k) \geq \gamma ~~ \forall k \in [1,m].
$$
These ellipsoids $\{e_k\}_{k=1}^m$ will be used later to construct an NPVO (see Section \ref{sec:npvo}, Algorithm \ref{alg:whole_system}).

\begin{algorithm}
    \caption{PredictObstacleMotion}\label{alg:ellipse}
    \begin{algorithmic}[1]
        \Procedure{PredictMotion}{$ \{ \Delta\mathbf{p}_i\}_{i=1}^{n}, \mathcal{NN}, \gamma$}
        \State $\mathbf{x} = \{ \Delta\mathbf{p}_i + \epsilon_i\}_{i=1}^{n}, ~~~ \epsilon_i \sim G(0,\sigma)$
        \vspace{0.3em}
        \State $\hat{Y} = \{\}$ 
        \vspace{0.3em}
        \State $ellipoids = \{\}$
        \vspace{0.1em}
        
        \For{$j=[1,N_{s}]$ }
            \vspace{0.3em} 
            \State $\{ \Delta\mathbf{\hat{p}}_{i}\}_{i=n+1}^{n+m} = \mathcal{NN}(\textbf{x})$
            \Comment{with dropout}
            \vspace{0.3em} 
            \State $\hat{Y} \gets \{ \Delta\mathbf{\hat{p}}_{i}\}_{i=n+1}^{n+m}$
        \EndFor
        
        \For{$k=[1,m]$}
            \vspace{0.3em} 
            \State $\mathbf{\mu}_k, \Sigma_k = MLE(\{ \Delta\mathbf{p}_{n+k}^i \}_{i=1}^{N_s})$
            \vspace{0.3em} 
            \State $\hat{\mathbf{p}}_{k} = \mathbf{p}_n + \sum_{i=n+1}^{k}\mathbf{\mu}_i $
            \vspace{0.3em} 
            \State $e_k = \{ \mathbf{p} \mid \mathbb{P}(\mathbf{p} \in e_k) \geq \gamma \} $
            \vspace{0.3em} 
            \State $ellipsoids \gets e_k$
        \EndFor
        
        \Return ellipsoids 
        \EndProcedure        
    \end{algorithmic}

\end{algorithm}

\begin{figure}
    \centering
    \begin{tikzpicture}
        \node (input) at (0,0) {$\{\Delta\mathbf{p}_i\}_{i=1}^{n}$};
        \node (output) at (6.5,0.7) {$\{\Delta\mathbf{\hat{p}}_{i} \}_{i=n+1}^{n+m}$};
        \node[draw] at (3.0,0.7) (pred) {Prediction Net};
        \node[fill=white,draw] at (3.0,-0.7) (train) {Training Net};
        
        \draw[thick,->] (input) -- ([xshift=-1mm] pred.west);
        \draw[thick,->] (input) -- ([xshift=-1mm] train.west);
        \draw[thick,->] (pred.east) -- (output);
        
        \draw[thick,dashed,->] ([xshift=-5.5mm] train.north) node[left,yshift=4.5mm,xshift=-1mm] {$\mathcal{W}$} to[out=120,in=-120] ([xshift=-5mm] pred.south);
       
        \begin{scope}[on background layer] 
            \draw[thick,->] (train.east) -- (4.5,-0.7) -- (4.5,-1.35) -- (2.7,-1.35) -- (2.7,-1.2) -- (3.7,-0);
        \end{scope}
    \end{tikzpicture}
    \caption{Two copies of the LSTM network are used in parallel for online prediction.}
    \label{fig:online_threading}
\end{figure}
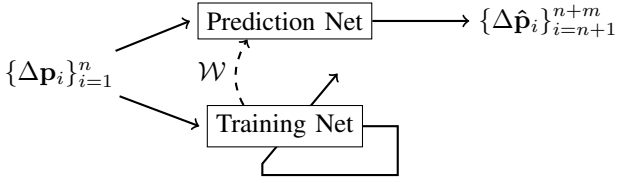

To achieve real-time online training and prediction, we use the multithreading approach shown in Figure \ref{fig:online_threading}. The reason for this is that Algorithm \ref{alg:train} is not guaranteed to terminate within $\Delta t$ time, in which case it is important to be able to still make predictions. Two identical copies of the network, one for training and one for prediction, are created. Algorithm \ref{alg:train} determines weights of the training network. These weights are then copied to the prediction network and used in Algorithm \ref{alg:ellipse}. The only difference between the two networks is the dropout mask used: the training network always uses different masks at each timestep for regularization, while the prediction network uses the same mask at each timestep to approximate a Bayesian network \cite{gal2016theoretically}. 

\section{Nonlinear Probabilistic Velocity Obstacles}\label{sec:npvo}

In this section, we shown how the predictions generated in Section \ref{sec:pred} can be used for collision avoidance. In doing so, we propose a new velocity obstacle concept, the NPVO. The traditional velocity obstacle is defined as follows:

\begin{definition}{\textit{Velocity Obstacle. \cite{van2011reciprocal}}}
    \label{def:VO}
    The Velocity Obstacle for agent $A$ induced by agent $B$ for time window $\tau$ is the set of all velocities of $A$ that will result in a collision between $A$ and $B$ within $\tau$ time:
        $$
        VO_{A \mid B}^{\tau} = \{ \mathbf{v} \mid \exists t \in [0,\tau] ~ :: ~  \mathbf{v}t \in D(\mathbf{p}_B - \mathbf{p}_A, r_{s}) \}
        $$
    where $D(\mathbf{p}, r)$ denotes a disk of radius $r$ centered at position $\mathbf{p}$, $\mathbf{p}_A$ ($\mathbf{p}_B$) is the position of agent $A$ ($B$), and $r_{s}$ is the minimum safe distance between agents.
\end{definition}

Our NPVO is a straightforward extension of this notion to discrete time, under the assumption that we can obtain some probabilistic estimate of the future position of a moving obstacle. The basic definition, assuming a single obstacle in the workspace, is given below:

\begin{definition}{\textit{Nonlinear Probabilistic Velocity Obstacle.}}
    \label{def:NPVO}
    
    Assume the position of obstacle $B$ at timestep $k$ is estimated by the random variable $\mathbf{P}^B_k \sim \mathcal{F}_k$. The NPVO for agent $A$ induced by obstacle $B$ for $m$ timesteps with probability $\gamma$ is then given by
    $$
    NPVO_{A\mid B}^{m, \gamma} = 
    $$
    $$
    \{ \mathbf{v} \mid \exists k \in [0,m] ~ :: ~ \mathbb{P}( \mathbf{v}k \in  D(\mathbf{p}^A - \mathbf{P}^B_k, r_{s})) > \gamma \}
    $$
    where $D(\mathbf{p}, r)$ denotes a disk of radius $r$ centered at position $\mathbf{p}$, $\mathbf{p}^A$ is the current position of agent $A$, and $r_{s}$ is the minimum safe distance between agents.
\end{definition}

Given a nonlinear probabilistic velocity obstacle and a desired velocity, we can specify a safe velocity via the constrained optimization problem
$$
\argmin( \norm{\mathbf{v}_{des} - \mathbf{v}_{safe}}_{\mathcal{L}_2}^2),
$$
$$
\mathbf{v}_{safe} \notin NPVO_{A\mid B}^{\gamma,m},
$$
where $\mathbf{v}_{des}$ is a desired velocity. $\mathbf{v}_{safe}$ is guaranteed to avoid a collision for the next $m$ timesteps with a given probability, as long as there exists a dynamically feasible $\mathbf{v} \notin NPVO_{A\mid B}^{\gamma,m}$. 

The NPVO concept can be easily applied to achieve collision avoidance with the prediction system outlined in Section \ref{sec:pred}, as shown in Algorithm \ref{alg:whole_system}. 

\begin{algorithm}
    \caption{NPVO Collision Avoidance}\label{alg:whole_system}
    \begin{algorithmic}[1]
        \Procedure{CollAvoid}{$\mathbf{p}_n$, $\{ \Delta\mathbf{p}_i\}_{i=1}^n$, $\gamma$, $\mathbf{v}_{des}$, $goal$}
        \While{$goal$ not reached}
            
            \State $\mathcal{NN} = \textsc{TrainNetwork}(\{ \Delta\mathbf{p}_1, ..., \Delta\mathbf{p}_{n} \})$
            
            \State $\{e_k\}_{k=1}^m = \textsc{PredictMotion}(\{ \Delta\mathbf{p}_i \}_{i=1}^n, \mathcal{NN},\gamma)$
            
            \State $NPVO_{A\mid B}^{\gamma,m} = $
            
            $\{ \mathbf{v} \mid \exists k \in [1,m] ~ :: ~ \mathbf{v}k + r \in e_k \forall r \text{ s.t.} |r| \leq r_{safe} \}$
            
            \State $\mathbf{v}_{safe} =\argmin( \lVert\mathbf{v}_{des} - \mathbf{v}_{safe}\rVert^2)$
            
            ~~~~~~~~~~ s.t. $\mathbf{v}_{safe} \notin NPVO_{A\mid B}^{\gamma,m}$\label{alg:whole_system:vsafe}
            
            \State apply $\mathbf{v}_{safe}$
        \EndWhile
        \EndProcedure
    \end{algorithmic}
\end{algorithm}

The notion of an NPVO can be easily extended to the multi-agent case, allowing us to use our prediction paradigm for multi-agent collision avoidance. 

\begin{definition}{\textit{Multi-Agent NPVO}}
    \label{def:multiagent_NPVO}
    
    Assume the positions of $N$ obstacles $\{B_i\}_{i=1}^N$ at timestep $k$ can be estimated by the random variables $\mathbf{P}_{B_i} \sim \mathcal{F}_k^i$. The NPVO for agent $A$ induced by $\{B_i\}_{i=1}^N$ for $m$ timesteps with probability $\gamma$ is then given by
    $$
        NPVO_{A\mid \{B_1, B_2, ... , B_N\}}^{m, \gamma} =
    $$
    \begin{multline*}
        \{ \mathbf{v} \mid \exists k \in [0,m] ~ :: ~
        \mathbb{P}( \mathbf{v}k \in  D(\mathbf{p}^A - \mathbf{P}^{B_i}_k, r_{s})) \geq \gamma \\
        \text{ for any } i \in [1,N]\}
    \end{multline*}
    where $D(\mathbf{p}, r)$ denotes a disk of radius $r$ centered at position $\mathbf{p}$, $\mathbf{p}^A$ is current the position of agent $A$, and $r_{s}$ is the minimum safe distance between $A$ and any agent $B_i$.
\end{definition}

The corresponding procedure for multi-agent collision avoidance using an NPVO is outlined in Algorithm \ref{alg:multiagent_system}.

\begin{algorithm}
    \caption{Multi-agent NPVO Collision Avoidance}\label{alg:multiagent_system}
    \begin{algorithmic}[1]
        \Procedure{MultiCA}{$\{\mathbf{p}_n^j\}_{j=1}^N$, $\{\{ \Delta\mathbf{p}^j_i \}_{i=1}^n\}_{j=1}^N$, $\gamma$, $\mathbf{v}_{des}$, $goal$}
        \While{$goal$ not reached}
            \For{$j \in [1,N]$}
                \State $\mathcal{NN}^j = \textsc{TrainNetwork}(\{ \Delta\mathbf{p}_1^j, ..., \Delta\mathbf{p}_{n}^j \})$
                
                \State $\{e^j_k\}_{k=1}^m = $
                
                \hfill$\textsc{PredictMotion}(\{ \Delta\mathbf{p}_i^j\}_{i=1}^n, \mathcal{NN}^j,\gamma)$
            \EndFor
            
            \State $NPVO_{A\mid B_1, B_2,...,B_N}^{\gamma,m} =$
            
            \hfill$\{ \mathbf{v} \mid \exists k \in [1,m] ~ :: ~ \mathbf{v}k \in e^j_k \text{ for any } j \in [1..N] \}$
            
            \State $\mathbf{v}_{safe} =\argmin( \lVert\mathbf{v}_{des} - \mathbf{v}_{safe}\rVert^2)$
            
            \hfill s.t. $\mathbf{v}_{safe} \notin NPVO_{A\mid B_1,B_2,...,B_N}^{\gamma,m}$
            
            \State apply $\mathbf{v}_{safe}$
        \EndWhile
        \EndProcedure
    \end{algorithmic}
\end{algorithm}
\section{Statistical Model Checking}\label{sec:stat_mc}
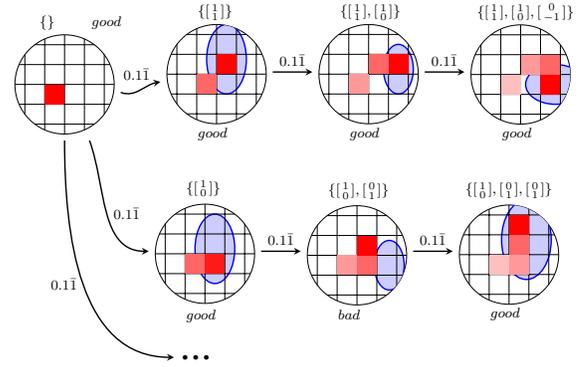
\begin{figure}
    \centering
    \begin{tikzpicture}[scale=0.66, ->, >=stealth, node distance=0.8cm ]
        % bounding box to reserve space
        \useasboundingbox (-2,-6) rectangle (11,2);
        
        % Resize the following by 'scale'
        \pgflowlevelsynccm
        
        % Starting node
        \node[matrix] (s1) 
        {
            {\node[text width=1.0cm] at (0.0,1.25) {\footnotesize $\{\}$};}
            {\node[text width=0.3cm] at (0.7,1.25) {\footnotesize $good$};}
            \clip[draw] circle (1cm);
            \draw[step=0.4cm] (-1,-1) grid (1,1); 
            \fill[red!99!white] (-0.4,-0.4) rectangle (0.0,0.0);\\
        };
        
        % TOP ROW 
        \node[matrix, right=of s1] (s2) 
        {
            {\node[text width=1.0cm] at (-0.5,1.25) {\footnotesize $\{[\begin{smallmatrix}1\\1\end{smallmatrix}]\}$};}
            {\node[text width=1.0cm] at (-0.5,-1.2) {\footnotesize $good$};}
            \clip[draw] circle (1cm);
            {\draw[blue,thick,fill=blue!20] (0.2,0.3) ellipse (0.4cm and 0.7cm);}
            \draw[step=0.4cm] (-1,-1) grid (1,1); 
            \fill[red!60!white] (-0.4,-0.4) rectangle (0.0,0.0);
            \fill[red!99!white] (-0.0,-0.0) rectangle (0.4,0.4);\\
        };
        \node[matrix, right=of s2] (s3) 
        {
            {\node[text width=1.0cm] at (-0.7,1.25) {\footnotesize $\{[\begin{smallmatrix}1\\1\end{smallmatrix}],[\begin{smallmatrix}1\\0\end{smallmatrix}]\}$};}
            {\node[text width=1.0cm] at (-0.5,-1.2) {\footnotesize $good$};}
            \clip[draw] circle (1cm);
            {\draw[blue,thick,fill=blue!20] (0.6,0.1) ellipse (0.3cm and 0.5cm);}
            \draw[step=0.4cm] (-1,-1) grid (1,1); 
            \fill[red!40!white] (-0.4,-0.4) rectangle (0.0,0.0);
            \fill[red!60!white] (-0.0,-0.0) rectangle (0.4,0.4);
            \fill[red!99!white] (0.4,0.0) rectangle (0.8,0.4);\\
        };
        \node[matrix, right=of s3] (s4) 
        {
            {\node[text width=1.5cm] at (-1,1.25) {\footnotesize $\{[\begin{smallmatrix}1\\1\end{smallmatrix}],[\begin{smallmatrix}1\\0\end{smallmatrix}],[\begin{smallmatrix}0\\-1\end{smallmatrix}]\}$};}
            {\node[text width=1.5cm] at (-0.5,-1.2) {\footnotesize $good$};}
            \clip[draw] circle (1cm);
            {\draw[blue,thick,fill=blue!20] (0.7,-0.2) ellipse (0.6cm and 0.4cm);}
            \draw[step=0.4cm] (-1,-1) grid (1,1); 
            \fill[red!25!white] (-0.4,-0.4) rectangle (0.0,0.0);
            \fill[red!40!white] (-0.0,-0.0) rectangle (0.4,0.4);
            \fill[red!60!white] (0.4,0.0) rectangle (0.8,0.4);
            \fill[red!99!white] (0.4,-0.4) rectangle (0.8,0.0);\\
        };
       
        % BOTTOM ROW 
        \node[matrix, below right=of s1] (s5) 
        {
            {\node[text width=0.5cm] at (-0.5,1.6) {\footnotesize $\{[\begin{smallmatrix}1\\0\end{smallmatrix}]\}$};}
            {\node[text width=0.5cm] at (-0.5,-1.0) {\footnotesize $good$};}
            \clip[draw] circle (1cm);
            {\draw[blue,thick,fill=blue!20] (0.2,0.1) ellipse (0.4cm and 0.7cm);}
            \draw[step=0.4cm] (-1,-1) grid (1,1); 
            \fill[red!60!white] (-0.4,-0.4) rectangle (0.0,0.0);
            \fill[red!90!white] (-0.0,-0.4) rectangle (0.4,0.0);\\
        };
        \node[matrix, right=of s5] (s6) 
        {
            {\node[text width=1cm] at (-0.7,1.3) {\footnotesize $\{[\begin{smallmatrix}1\\0\end{smallmatrix}],[\begin{smallmatrix}0\\1\end{smallmatrix}]\}$};}
            {\node[text width=0.5cm] at (-0.5,-1.2) {\footnotesize $bad$};}
            \clip[draw] circle (1cm);
            {\draw[blue,thick,fill=blue!20] (0.65,-0.2) ellipse (0.3cm and 0.5cm);}
            \draw[step=0.4cm] (-1,-1) grid (1,1); 
            \fill[red!40!white] (-0.4,-0.4) rectangle (0.0,0.0);
            \fill[red!60!white] (-0.0,-0.4) rectangle (0.4,0.0);
            \fill[red!99!white] (-0.0,-0.0) rectangle (0.4,0.4);\\
        };
        \node[matrix, right=of s6] (s7) 
        {
            {\node[text width=1.5cm] at (-1,1.3) {\footnotesize $\{[\begin{smallmatrix}1\\0\end{smallmatrix}],[\begin{smallmatrix}0\\1\end{smallmatrix}],[\begin{smallmatrix}0\\1\end{smallmatrix}]\}$};}
            {\node[text width=1.5cm] at (-0.5,-1.2) {\footnotesize $good$};}
            \clip[draw] circle (1cm);
            {\draw[blue,thick,fill=blue!20] (0.35,0.3) ellipse (0.5cm and 0.8cm);}
            \draw[step=0.4cm] (-1,-1) grid (1,1); 
            \fill[red!25!white] (-0.4,-0.4) rectangle (0.0,0.0);
            \fill[red!40!white] (0.0,-0.4) rectangle (0.4,0.0);
            \fill[red!60!white] (0.0,0.0) rectangle (0.4,0.4);
            \fill[red!99!white] (0.0,0.4) rectangle (0.4,0.8);\\
        };
        
        \node[text width=1cm, below=of s5, yshift=0.5cm] (dotdotdot) {\Huge{...}};
       
        % EDGES 
        \path[every node/.style={font=\sffamily\footnotesize}]
          (s1) edge[out=-20, in=-170, thick] node[above,xshift=-0.07cm, yshift=0.07cm] {${0.1\bar1}$} (s2)
          (s2) edge[thick] node[above] {${0.1\bar1}$} (s3)
          (s3) edge[thick] node[above] {${0.1\bar1}$} (s4)
          
          (s1) edge[out=-70, in=180, thick] node[right] {${0.1\bar1}$} (s5)
          (s5) edge[thick] node[above] {${0.1\bar1}$} (s6)
          (s6) edge[thick] node[above] {${0.1\bar1}$} (s7)
          
          (s1) edge[out=-90, in=180, thick] node[left] {${0.1\bar1}$} (dotdotdot);
    \end{tikzpicture}
    \caption{An example Markov Chain used to verify our prediction system. Red shaded squares indicate the path of a (hypothetical) obstacle. The state $s = \{\Delta\mathbf{p}_1, ..., \Delta\mathbf{p}_n\}$ is indicated above each node. Transitions are drawn uniformly from a 9-cell grid (not all are shown). Predictions are indicated by blue ellipses: a state is $good$ if the latest position is within the last prediction.}
    \label{fig:markov_chain}
\end{figure}

Rigorously verifying that systems based on neural networks satisfy certain specifications, even probabilistically, is a difficult and open problem \cite{carlini2017towards,huang2017safety,katz2017reluplex}.  In safety-critical applications like dynamic collision avoidance, however, it is important to provide some guarantees of safety and performance. In this section, we demonstrate a novel means of providing approximate probabilistic guarantees for our prediction system.

While it might be tempting to reason about theoretical guarantees directly from the probability distribution estimated from dropout, it is well established that dropout provides only a lower bound on model uncertainty \cite{gal2016dropout,kahn2017uncertainty}. Therefore, we use \textit{statistical model checking} to obtain more rigorous results. Specifically, we are interested in estimating the probability that the an obstacle $B$ will remain within prediction ellipsoids $e^B_k$ throughout in the next $m$ timesteps:
\begin{equation}\label{eqn:model_checking_spec}
    \mathbb{P}(\mathbf{p}^B_{n+k} \in e^B_k ~ \forall ~ k \in [1,m]).
\end{equation}
As shown in Section \ref{sec:proof}, estimating this probability will allow us to prove important properties of Algorithms \ref{alg:whole_system} and \ref{alg:multiagent_system}. 

The general problem of statistical model checking is defined as follows:

\begin{definition}{Statistical Model Checking}
    \label{def:statistical_model_checking}
    
    Given a model $\mathcal{M}$ and a property $\phi$, determine if $\mathbb{P}(\mathcal{M} \vDash \phi) \geq \theta$, where $\theta$ is a desired performance threshold. 
\end{definition}

Typically, $\mathcal{M}$ is a stochastic model such as a Markov Chain (MC), and $\phi$ is encoded in a logic such as Probabilistic Computation Tree Logic (PCTL) \cite{hansson1994logic}. Assuming that $\phi$ can be determined on finite executions of $\mathcal{M}$, we define
$
B_i \sim Bernoulli(p)
$
such that $b_i = 1$ if the $i^{th}$ execution of $\mathcal{M}$ satisfies $\phi$. Model checking is then reduced to choosing between two hypothesis: 
$$H_0 : p \geq \theta + \delta$$
$$H_1 : p < \theta - \delta$$
where $\delta > 0$ denotes some indifference region. In this work, we use the Sequential Probability Ratio Test (SPRT) \cite{wald1945sequential,legay2010statistical}, to choose between $H_0$ and $H_1$ with a given strength $(\alpha, \beta)$\footnote{$\alpha$ and $\beta$ denote the maximum probabilities of Type I and Type II error respectively} using minimal samples $m$. 

% To do so, we calculate the likelihood ratio
% $$
% \frac{p_{1m}}{p_{0m}} = \prod_{i=1}^{m}\frac{\mathbb{P}(B_i = b_i \mid p = p_1)}{\mathbb{P}(B_i = b_i \mid p = p_0)} = \frac{p_1^{d_m}(1-p_1)^{m-d_m}}{p_0^{d_m}(1-p_0)^{m-d_m}}
% $$
% where $d_m = \sum_{i=1}^{m}b_i$. We then define constants $A$ and $B$, and sample until $\frac{p_{1m}}{p_{0m}} \leq B$ (choose $H_0$) or $\frac{p_{1m}}{p_{0m}} \geq A$ (choose $H_1$). By choosing 
% $A = \frac{1-\beta}{\alpha}$ and $B = \frac{\beta}{1-\alpha}$,
% the true strength $(\hat\alpha, \hat\beta)$ is bounded by 
% $(\frac{\alpha}{1-\beta}, \frac{\beta}{1-\alpha})$ \cite{wald1945sequential}.

To verify the performance of our prediction system, we first compute the MC abstraction $$\mathcal{M} = (\mathcal{S}, \mathit{Init}, T, AP, L)$$ where
\begin{itemize}
    \item $\mathcal{S} \ni s = \{ \Delta\mathbf{p}_1, ..., \Delta\mathbf{p}_n \}$, $\Delta\mathbf{p}_i \in \mathcal{P}$, where $\mathcal{P}$ is a set of possible position changes
    \item $\mathit{Init} = \{ \}$
    \item $T(s,s') = \mathbb{P}(\Delta\mathbf{p}_{n+1} \mid s) = \begin{cases}
        \frac{1}{|\mathcal{P}|} & \Delta\mathbf{p}_{n+1} \in \mathcal{P} \\
        0 & \text{else}
    \end{cases}$
    \item $AP = \{ \mathit{bad}, \mathit{good} \}$
    \item $L(s) = \begin{cases}
        good & \mathbf{p}_{n-m+k} \in e_k ~ \forall k \in [1,m] \\
        bad & \text{else}
    \end{cases}$
\end{itemize}
and $\mathbf{p}_j = \sum_{i=1}^j \Delta\mathbf{p}_i$.
An example of this abstraction is shown in Figure \ref{fig:markov_chain}. 

\begin{remark}\label{remark}
Uniform random transitions $T(s,s')$ form a \textit{worst case scenario} for our prediction system: since the prediction system described in Section \ref{sec:pred} relies on structure underlying the timeseries data, its performance given uniform random transitions should give a lower bound on the performance of the system. 
\end{remark}

With this in mind, given a sufficiently fine gridding $\mathcal{P}$, we can estimate the value of (\ref{eqn:model_checking_spec}) by checking the PCTL property 
\begin{equation}
    \phi^* = \mathbb{P}_{\geq \theta}( \square \mathit{good} )
\end{equation}
for various values of $\theta$. The highest value of $\theta$ such that $\phi^*$ is satisfied is an upper bound (recalling Remark (\ref{remark})) of the probability of mispredicting an obstacle's motion. 

Unfortunately, existing statistical model checking methods do not handle unbounded properties like $\phi^*$ well \cite{legay2010statistical}. Instead, we verify the related PCTL property
\begin{equation}
    \label{eq:specification}
    \phi = \mathbb{P}_{\geq \theta}( \square^{\leq N} \mathit{good} ).
\end{equation}

For sufficiently large $N$, the largest $\theta$ such that the model $\mathcal{M}$ satisfies $\phi$ serves as an \textit{approximation} of the probability of mispredicting an obstacle's motion (Equation (\ref{eqn:model_checking_spec})).

\subsection{Example}

As an example, we define a 3x3 grid
$$
\mathcal{P} =  \{\Delta\mathbf{p}\} = \{ \begin{bmatrix}
\Delta x \in \{-1,0,1\} \\
\Delta y \in \{-1,0,1\}
\end{bmatrix}  \}.
$$
For all trials, we use the SPRT to determine the satisfaction of Equation \ref{eq:specification} with $N=20$, $\alpha = 0.1$, $\beta = 0.1$, and $\delta = 0.05$. The results for select values of $\theta$ are shown in Table \ref{tab:stat_mc}.

\begin{table}
    \centering
    \begin{tabular}{@{}x{2cm}x{1.1cm}x{1.1cm}x{1.1cm}x{1.1cm}@{}}
        \toprule
             & $\theta=0.9$ & $\theta=0.85$ & $\theta=0.8$ & $\theta=0.75$ \\
        \midrule
            $\sigma^2=0$ & UNSAT & UNSAT & SAT & SAT \\
            $\sigma^2=0.001$ & UNSAT & SAT & SAT & SAT \\
            $\sigma^2=0.01$ & SAT & SAT & SAT & SAT \\
            $\sigma^2=0.05$ & SAT & SAT & SAT & SAT \\
        \bottomrule
    \end{tabular}
    \caption{Satisfaction of Equation (\ref{eq:specification}) for various obstacle threshold probabilities $\theta$ and noise variances $\sigma^2$.}
    \label{tab:stat_mc}
\end{table}

Recall also that the network's inputs and outputs are characterized by the history of changes in position
$$\{\Delta\mathbf{p}_i + \epsilon_i \}_{i=1}^n, ~~~ \epsilon_i \sim G(0,\sigma^2).$$
We test robustness to perception uncertainty by adjusting the variance, $\sigma^2$, of the noise added to position measurements. The results in Table \ref{tab:stat_mc} for different values of $\sigma$ demonstrate that predictions are not only robust to this sort of additive perception uncertainty, but in fact the addition of noise improves the quality of predictions. Intuitively, this suggests that noisier training data results in higher uncertainty estimates, but at the price of reduced precision. We leave a more thorough characterization of this tradeoff as a topic of possible future work. 

\section{Provable Correctness}\label{sec:proof}

In this section, we show that a known value of Equation (\ref{eqn:model_checking_spec}), which we approximated in the previous section, allows us to derive probabilistic guarantees on the performance of our collision avoidance approach. Specifically, we will show that we can obtain bounds on the probability of collision in multi-agent scenarios as well as for $N$-agent reciprocal collision avoidance. The proofs, omitted here for brevity, are available at {\small\texttt{\url{https://arxiv.org/abs/1811.01075}}}.
    
First, consider a scenario in which agent $A$ uses Algorithm \ref{alg:whole_system} to avoid a collision with arbitrarily moving obstacle $B$. We denote the position of agent $A$ at timestep $n$ by $\mathbf{p}_n^A$.
\begin{proposition}{\textit{Single-Agent Collision Avoidance.}}\label{prop:1_agent}

    Assume that:
    \begin{enumerate}
        \item We can guarantee the accuracy of predictions $e_k$ in the sense that $\mathbb{P}(\mathbf{p}^B_{n+k} \in e^B_k  ~ \forall ~ k \in [1,m]) \geq \theta$.\label{assumption:theta1}
        \item There exists some dynamically feasible velocity $\mathbf{v}_{safe} \notin NPVO_{A \mid B}^{\gamma, m}$.\label{assumption:existence1}
    \end{enumerate}  
     Then the probability of a collision between $A$ and $B$ in $m$ timesteps, $\mathbb{P}(collision)$, is bounded by $1-\theta$. 
\end{proposition}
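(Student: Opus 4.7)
The plan is to proceed by a law-of-total-probability argument, conditioning on whether the obstacle's true trajectory stays within the predicted ellipsoids. Define the event
\[
E = \{\mathbf{p}^B_{n+k} \in e^B_k \text{ for all } k \in [1,m]\},
\]
so that assumption \ref{assumption:theta1} gives $\mathbb{P}(E) \geq \theta$, hence $\mathbb{P}(\neg E) \leq 1-\theta$. The whole proof then reduces to showing that $\mathbb{P}(\text{collision} \mid E) = 0$, after which
\[
\mathbb{P}(\text{collision}) \leq \mathbb{P}(\text{collision}\mid E)\,\mathbb{P}(E) + \mathbb{P}(\neg E) \leq 1-\theta.
\]

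To establish the conditional claim, I would unpack the definition of the NPVO as used in Algorithm \ref{alg:whole_system}: a velocity $\mathbf{v}$ lies outside $NPVO_{A\mid B}^{\gamma,m}$ exactly when, for every $k\in[1,m]$, the displacement $\mathbf{v} k$ is at distance at least $r_s$ from every point of $e_k$. Applying $\mathbf{v}_{safe}$ (which exists by assumption \ref{assumption:existence1}) for $m$ steps, the agent's relative position with respect to $A$'s initial location at step $n+k$ is $\mathbf{v}_{safe}\,k$. Thus for every $k$,
\[
\|\mathbf{p}^A_{n+k} - \mathbf{q}\| \geq r_s \quad \text{for all } \mathbf{q} \in e^B_k.
\]
Conditioned on $E$, the actual obstacle position $\mathbf{p}^B_{n+k}$ is one such $\mathbf{q}$, so $\|\mathbf{p}^A_{n+k} - \mathbf{p}^B_{n+k}\| \geq r_s$ for all $k\in[1,m]$, which by the definition of $r_s$ in Section \ref{sec:problem_formulation} means no collision occurs.

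The main obstacle, and the step deserving the most care, is reconciling the probabilistic definition of the NPVO (Definition \ref{def:NPVO}) with its deterministic realization in Algorithm \ref{alg:whole_system} through the ellipsoids $e_k$. In particular, the NPVO was phrased in terms of $\mathbb{P}(\mathbf{v}k \in D(\mathbf{p}^A - \mathbf{P}^B_k, r_s)) > \gamma$, while the algorithm uses a containment condition involving $e_k$ inflated by $r_s$. I would add a brief lemma (or remark) that, because $e_k$ is precisely the $\gamma$-confidence set for $\mathbf{P}^B_k$, the algorithmic NPVO is an outer approximation of the probabilistic one, so $\mathbf{v}_{safe}$ avoiding the former implies the geometric separation above. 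Once that bridge is in place, the total-probability bookkeeping is one line and the bound $1-\theta$ follows immediately; no further analysis of the learning or dropout procedure is needed, since their role is entirely captured by the hypothesis on $\theta$.
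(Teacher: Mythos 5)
Your argument is correct and is essentially the paper's own proof: the paper likewise unpacks the ellipsoid-based NPVO to conclude that $\mathbf{v}_{safe}$ keeps the $r_{safe}$-ball around $\mathbf{p}^A_{n+k}$ disjoint from $e_k$ for every $k\in[1,m]$, so a collision can occur only on the event $\neg E$, whose probability is at most $1-\theta$ by assumption. The only cosmetic differences are that the paper states this as ``a collision occurs only if $\mathbf{p}^B_{n+k}\notin e_k$ for some $k$'' rather than via your explicit law-of-total-probability conditioning, and it works directly with the algorithmic (ellipsoid) form of the NPVO without the bridging remark to Definition~\ref{def:NPVO} that you rightly flag as deserving care.
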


\begin{proof}
    Consider the set of ellipses generated by Algorithm \ref{alg:ellipse}, $\{e_k\}_{k=1}^m$. Recall that the corresponding NPVO is specified by 
    \begin{multline*}
        NPVO_{A \mid B}^{\gamma,m} = \\
        \{\mathbf{v} \mid \exists k \in [1,m] :: \mathbf{v}k + r \in e_k ~ \forall ~ r , |r| < r_{safe} \}.
    \end{multline*}
    By line \ref{alg:whole_system:vsafe} of Algorithm \ref{alg:whole_system} and assumption \ref{assumption:existence1},
    $$ \mathbf{v}_{safe} \notin NPVO_{A \mid B}^{\gamma, m}$$
    which implies
    \begin{equation}\label{eqn:position}
        \begin{split}
        \mathbf{p}^A_{n+k} = \mathbf{p}^A_{n} + r + \mathbf{v}_{safe}k ~~ \notin ~ e_k \\
        \forall ~ k \in [1,m], |r| < r_{safe}.
        \end{split}
    \end{equation}
    Note that a collision between $A$ and $B$ occurs only if
    $$
    \norm{\mathbf{p}^A_{n+k} - \mathbf{p}^B_{n+k}} < r_{safe},
    $$
    which by Equation (\ref{eqn:position}) occurs only if $\mathbf{p}^B_{n+k} \notin e_k \text{ for some } k \in [1,m]$.
    \begin{align*}
        \begin{split}
        \mathbb{P}(\mathbf{p}^B_{n+k} \notin e_k \text{ for any } k \in [1,m]) \\
        = 1 - \mathbb{P}(\mathbf{p}_{n+k} \in e_k  ~ \forall ~ k \in [1,m]) \\
        \leq 1- \theta
        \end{split}    
    \end{align*}
    by Assumption \ref{assumption:theta1}. Therefore $\mathbb{P}(collision) \leq 1-\theta$.
\end{proof}

Next, consider a scenario in which two agents, $A$ and $B$, both use Algorithm \ref{alg:whole_system} to avoid a collision.

\begin{proposition}{\textit{Dual-Agent Collision Avoidance.}}\label{prop:two_agent}

 Assume that:
    \begin{enumerate}
        \item For each agent, we can guarantee the accuracy of predictions $e_k$ in the sense that $\mathbb{P}(\mathbf{p}_{n+k} \in e_k  ~ \forall ~ k \in [1,m]) \geq \theta$.\label{assumption:theta}
        \item For both of the agents, there exists some dynamically feasible velocity outside the velocity obstacle (i.e. $\exists \mathbf{v}^A_{safe} \notin NPVO_{A \mid B}^{\gamma, m}$ for agent $A$, and $ \exists \mathbf{v}^B_{safe} \notin NPVO_{B \mid A}^{\gamma, m}$ for agent $B$).\label{assumption:existence}
    \end{enumerate}  
     Then the probability of a collision between $A$ and $B$ in $k$ timesteps, $\mathbb{P}(collision)$, is bounded by $(1-\theta)^2$. 
\end{proposition}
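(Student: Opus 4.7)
The plan is to mirror the structure of Proposition~\ref{prop:1_agent}, observing that in the reciprocal setting a collision requires \emph{both} agents to have mispredicted. This is what should yield the quadratic improvement from $1-\theta$ to $(1-\theta)^2$.

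First, I would show that accurate prediction by \emph{either} agent alone suffices to prevent a collision. Concretely, suppose the event $F_A = \{\mathbf{p}^B_{n+k} \in e^B_k \;\forall k \in [1,m]\}$ holds. Then exactly the argument of Proposition~\ref{prop:1_agent} applies: agent $A$ chooses $\mathbf{v}^A_{safe} \notin NPVO_{A\mid B}^{\gamma,m}$ (which exists by Assumption~\ref{assumption:existence}), and the defining condition of the NPVO, together with the $r_{safe}$ buffer in line~\ref{alg:whole_system:vsafe} of Algorithm~\ref{alg:whole_system}, forces $\|\mathbf{p}^A_{n+k}-\mathbf{p}^B_{n+k}\| \geq r_{safe}$ for every $k \in [1,m]$, so no collision occurs. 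By the symmetry of the setup, the analogous event $F_B = \{\mathbf{p}^A_{n+k} \in e^A_k \;\forall k \in [1,m]\}$ independently suffices, via agent $B$'s own safe-velocity choice.

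Taking contrapositives, a collision implies $F_A^c \cap F_B^c$. By Assumption~\ref{assumption:theta} applied to each agent's prediction system, $\mathbb{P}(F_A^c) \leq 1-\theta$ and $\mathbb{P}(F_B^c) \leq 1-\theta$. Treating the two prediction failures as independent events (they are produced by separate LSTM instances trained on disjoint observation streams, with independent dropout masks), I would conclude
$$
\mathbb{P}(\text{collision}) \;\leq\; \mathbb{P}(F_A^c \cap F_B^c) \;=\; \mathbb{P}(F_A^c)\,\mathbb{P}(F_B^c) \;\leq\; (1-\theta)^2.
$$

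The main obstacle is justifying the independence of $F_A^c$ and $F_B^c$: strictly speaking, the two agents' trajectories are coupled through their mutual avoidance behavior, so the events $F_A^c$ and $F_B^c$ could in principle be correlated. I would address this by stating independence as a modeling assumption inherited from the use of separate networks with independent stochasticity (noise perturbations and dropout samples), which is the same implicit assumption used when invoking the per-agent guarantee of Proposition~\ref{prop:1_agent}. A secondary, minor point is being careful that the $r_{safe}$ inflation of the ellipsoids in the NPVO definition (Algorithm~\ref{alg:whole_system}) correctly absorbs the agent's own extent, so that $\mathbf{p}^B_{n+k}\in e^B_k$ really does imply no collision at step $k$; this is a direct reuse of the displayed inequality~(\ref{eqn:position}) from the single-agent proof.
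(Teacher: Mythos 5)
Your proposal is correct and follows essentially the same route as the paper: define the two misprediction events, argue via Assumption 2 that a collision requires both, invoke Proposition \ref{prop:1_agent} for the per-agent bound $1-\theta$, and multiply using independence of the separately trained predictors. Your explicit flagging of the independence of $F_A^c$ and $F_B^c$ as a modeling assumption (given the coupling of trajectories through mutual avoidance) is a more careful reading than the paper's one-line assertion, but it does not change the argument.
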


\begin{proof}
    Let $\mathcal{A}$ and $\mathcal{B}$ denote the events that agents $A$ and $B$ mis-predict the other's motion. That is, 
    $$
    \mathcal{A} ~~:~~ \mathbf{p}_{n+k}^{B} \notin e_k^B \text{ for some } k \in [1,m]
    $$
    $$
    \mathcal{B} ~~:~~ \mathbf{p}_{n+k}^{A} \notin e_k^A \text{ for some } k \in [1,m].
    $$
    Under assumption \ref{assumption:existence}, the only way a collision will occur is if both agents mispredict the other's motion, so
    $$
    \mathbb{P}(collision) \leq \mathbb{P}(\mathcal{A}\cap\mathcal{B})
    $$
    The predictions of agent are made separately and without knowledge of the other, so the events $\mathcal{A}$ and $\mathcal{B}$ are independent. Following Proposition \ref{prop:1_agent}, $\mathbb{P}(\mathcal{A}) = \mathbb{P}(\mathcal{B}) = 1-\theta$, so
    $$
    \mathbb{P}(collision) \leq \mathbb{P}(\mathcal{A})\mathbb{P}(\mathcal{B}) = (1-\theta)^2
    $$
\end{proof}

Next, consider a scenario in which agent $A$ uses Algorithm \ref{alg:multiagent_system} to avoid collisions with $N$ arbitrarily moving obstacles, which we label $B_1, B_2,\dots, B_N$.
\begin{theorem}{\textit{Multi-Agent Collision Avoidance.}}
\label{theorem:multi_agent}

    Assume that:
    \begin{enumerate}
        \item For each obstacle $B_i$, we can guarantee the accuracy of predictions $e^i_k$ in the sense that $\mathbb{P}(\mathbf{p}^{B_i}_{n+k} \in e^{B_i}_k ~ \forall ~ k \in [1,m]) \geq \theta$.
        \item There exists some dynamically feasibly velocity $\mathbf{v}_{safe} \notin NPVO_{A \mid B_1,B_2,...,B_N}^{\gamma, m}$
    \end{enumerate}
     Then the probability of a collision between $A$ and \textit{any} of the obstacles $B_1,B_2,...B_N$, $\mathbb{P}(collision)$, is bounded by $1-\theta^N$.
\end{theorem}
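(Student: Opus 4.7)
The plan is to mirror the structure of Proposition \ref{prop:1_agent}, but apply it once per obstacle and then combine the per-obstacle misprediction events by independence. Concretely, I would first define, for each $i \in [1,N]$, the misprediction event
$$
\mathcal{A}_i ~~:~~ \mathbf{p}^{B_i}_{n+k} \notin e^{B_i}_k \text{ for some } k \in [1,m].
$$
By Assumption 1 of the theorem, $\mathbb{P}(\mathcal{A}_i^c) \geq \theta$, hence $\mathbb{P}(\mathcal{A}_i) \leq 1-\theta$.

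Next I would argue, exactly as in the single-obstacle case, that the construction of $\mathbf{v}_{safe}$ in Algorithm \ref{alg:multiagent_system} and the definition of the multi-agent NPVO together imply that $\mathbf{p}^A_{n+k}$ lies outside $e^{B_i}_k$ (inflated by $r_{safe}$) for every $k$ and every $i$. Therefore a collision between $A$ and any particular $B_i$ can occur only on the event $\mathcal{A}_i$, and a collision between $A$ and \emph{any} of the $N$ obstacles can occur only on $\bigcup_{i=1}^N \mathcal{A}_i$. This gives
$$
\mathbb{P}(\text{collision}) \leq \mathbb{P}\Bigl( \bigcup_{i=1}^N \mathcal{A}_i \Bigr) = 1 - \mathbb{P}\Bigl( \bigcap_{i=1}^N \mathcal{A}_i^c \Bigr).
$$

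To finish, I would appeal to the fact that each obstacle's prediction is produced by a separate instance of the prediction network (a separate $\mathcal{NN}^j$ in Algorithm \ref{alg:multiagent_system}), trained only on the observation history of that obstacle and using an independent dropout mask. This makes the events $\{\mathcal{A}_i^c\}_{i=1}^N$ mutually independent, so $\mathbb{P}\bigl(\bigcap_i \mathcal{A}_i^c\bigr) = \prod_i \mathbb{P}(\mathcal{A}_i^c) \geq \theta^N$, yielding $\mathbb{P}(\text{collision}) \leq 1-\theta^N$.

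The main obstacle is justifying the independence step cleanly: the obstacles' physical trajectories in general need \emph{not} be independent (several $B_i$ could be coordinating, or reacting to $A$), and what independence really rests on here is that the verification guarantee $\mathbb{P}(\mathbf{p}^{B_i}_{n+k} \in e^{B_i}_k) \geq \theta$ is obtained per obstacle via the statistical model checking of Section \ref{sec:stat_mc}, where each prediction pipeline is its own probabilistic system. I would note this explicitly, either as an additional assumption of pairwise independence of the per-obstacle misprediction events, or alternatively fall back on a union bound $\mathbb{P}(\bigcup_i \mathcal{A}_i) \leq N(1-\theta)$, which is weaker than $1-\theta^N$ for $\theta$ near $1$ but avoids the independence hypothesis. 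The remaining steps (the NPVO-to-no-collision implication) are direct repetitions of the argument already carried out in Proposition \ref{prop:1_agent}, applied to each $e^{B_i}_k$ in turn.
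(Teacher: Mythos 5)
Your proposal follows essentially the same route as the paper's proof: the same per-obstacle misprediction events $\mathcal{A}_i$, the same union bound $\mathbb{P}(collision) \leq 1 - \mathbb{P}(\bigcap_i \mathcal{A}_i^c)$, and the same appeal to mutual independence via the separate prediction instances in Algorithm \ref{alg:multiagent_system} to get $\prod_i \mathbb{P}(\mathcal{A}_i^c) \geq \theta^N$. Your caveat about the independence step is well taken --- the paper justifies it exactly as you do, by the separateness of the prediction pipelines rather than any independence of the obstacles' physical trajectories --- but this does not change the fact that your argument and the paper's coincide.
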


\begin{proof}
    Let $\mathcal{A}_i$ denote the event
    $$
    \mathbf{p}_{n+k}^{B_i} \notin e_k^{B_i} \text{ for some } k \in [1,m], ~ i \in [1,N].
    $$
    Following Proposition \ref{prop:1_agent}, $\mathbb{P}(\mathcal{A}_i) \leq 1 - \theta$
    and therefore $\mathbb{P}(\mathcal{A}_i^C) \geq \theta$, where $\mathcal{A}_i^C$ denotes the complement of $\mathcal{A}_i$. The probability of a collision between $A$ and any of the obstacles $B_1,B_2,...,B_N$ is bounded as follows:
    $$
    \mathbb{P}(collision) \leq \mathbb{P}(\bigcup_{i=1}^N \mathcal{A}_i) = 1-\mathbb{P}(\bigcap_{i=1}^N \mathcal{A}_i^C).
    $$
    Note that since the predictions $e^{B_i}_k$ are calculated from separate instances in Algorithm \ref{alg:multiagent_system}, the events $\mathcal{A}_i$ (and therefore also $\mathcal{A}_i^C$) are mutually independent. Therefore
    $$
    \mathbb{P}(\bigcap_{i=1}^N \mathcal{A}_i^C) = \prod_{i=1}^N\mathbb{P}(\mathcal{A}_i^C) \geq \theta^N
    $$
    and
    $$
    \mathbb{P}(collision) \leq 1-\theta^N.
    $$
\end{proof}

Finally, consider a scenario in which $N$ agents, $A_1, A_2, \dots, A_N$, each use Algorithm \ref{alg:multiagent_system} to avoid collisions with the other $N-1$ agents.

\begin{theorem}{\textit{Reciprocal Collision Avoidance.}}\label{theorem:safety}

    Assume that:
    \begin{enumerate}
        \item We can guarantee that $\mathbb{P}(\mathbf{p}^{A_j}_{n+k} \in e^{A_j}_k ~ \forall ~ k \in [1,m]) \geq \theta$ for any agent $A_i$ relative to any other agent $A_j$ ($i \neq j$). 
        \item For all agents $A_i$, there exists some velocity $\mathbf{v}^{A_i}_{safe} \notin NPVO_{A_i \mid \{A_j\}_{i \neq j}}^{\gamma, m}$.\label{assumption:existence}
    \end{enumerate} 
    Then the probability that there is \textit{any} collision in the workspace, $\mathbb{P}(collision)$, is bounded by $1-(2\theta - \theta^2)^{\frac{1}{2}(N-1)N}$.
\end{theorem}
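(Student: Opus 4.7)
The plan is to reduce the $N$-agent workspace to a collection of pairwise interactions, each handled by Proposition \ref{prop:two_agent}. A collision in the workspace is, by definition, a collision between some specific pair of agents, so the event ``some collision occurs'' is the union over all $\binom{N}{2}$ unordered pairs $\{i,j\}$ of a pairwise collision event $\mathcal{C}_{ij}$.

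First I would apply Proposition \ref{prop:two_agent} pairwise. Each agent running Algorithm \ref{alg:multiagent_system} maintains a separate predictor per neighbor, and Assumption 1 supplies the accuracy bound $\theta$ for every such predictor. Hence for any pair $\{i,j\}$ the two-agent subsystem satisfies the hypotheses of Proposition \ref{prop:two_agent}, and
$$
\mathbb{P}(\mathcal{C}_{ij}) \leq (1-\theta)^2, \qquad \mathbb{P}(\mathcal{C}_{ij}^C) \geq 2\theta - \theta^2.
$$

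Next I would combine over pairs. There are $\binom{N}{2} = \tfrac{1}{2}N(N-1)$ unordered pairs. Treating the no-collision events $\mathcal{C}_{ij}^C$ as mutually independent --- motivated, as in Proposition \ref{prop:two_agent} and Theorem \ref{theorem:multi_agent}, by the fact that each pair is driven by its own separately-trained network instances with independent dropout samples and observation noise --- I would conclude
$$
\mathbb{P}\!\left( \bigcap_{\{i,j\}} \mathcal{C}_{ij}^C \right) \;=\; \prod_{\{i,j\}} \mathbb{P}(\mathcal{C}_{ij}^C) \;\geq\; (2\theta - \theta^2)^{\binom{N}{2}},
$$
and complementing yields $\mathbb{P}(\text{collision}) \leq 1 - (2\theta - \theta^2)^{\frac{1}{2}N(N-1)}$, as claimed.

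The main obstacle is justifying the pairwise independence cleanly. The events $\mathcal{C}_{ij}$ are not obviously independent across pairs: agent $A_i$'s realized trajectory appears in every pair involving $A_i$, and that trajectory is itself the output of $A_i$'s NPVO optimization against all other agents at once. The cleanest defense follows the convention already adopted in Propositions \ref{prop:1_agent} and \ref{prop:two_agent} and Theorem \ref{theorem:multi_agent}, where each misprediction event is attributed to the dropout randomness of one particular predictor instance, and distinct predictor instances are taken to be independent. A strictly rigorous proof would require isolating, for each pair, the randomness driving that pair's misprediction and verifying independence across pairs; if full independence cannot be argued, a union bound giving $\binom{N}{2}(1-\theta)^2$ remains available as a weaker but unconditional fallback.
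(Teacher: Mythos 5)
Your proposal is correct and follows essentially the same route as the paper: the paper defines mutual-misprediction events $\mathcal{A}_{ij}$ for each pair, bounds each by $(1-\theta)^2$ via Proposition \ref{prop:two_agent}, and multiplies the complements over all $\binom{N}{2}$ pairs under the same independence assumption you invoke (it merely groups the pairs into intermediate events $\mathcal{B}_i = \bigcup_{j>i}\mathcal{A}_{ij}$ before taking the product, which yields the identical exponent $\tfrac{1}{2}N(N-1)$). The one refinement worth making is that independence should be asserted for the misprediction events $\mathcal{A}_{ij}$ rather than for the collision events $\mathcal{C}_{ij}$ themselves (which are coupled through the realized trajectories), first bounding $\mathbb{P}(\bigcup\mathcal{C}_{ij}) \leq \mathbb{P}(\bigcup\mathcal{A}_{ij})$ --- a point your closing paragraph already anticipates and which the paper handles in exactly this way.
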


\begin{proof}
    Let $\mathcal{A}_{ij}$ denote the event that agents $A_i$ and $A_j$ both mispredict each other's motion. That is,
    $$
        \mathcal{A}_{ij} ~:~ (\mathbf{p}_{n+k}^{A_i} \notin e_k^{A_i}) \cap (\mathbf{p}_{n+k}^{A_j} \notin e_k^{A_j}) \text{ for some } k \in [1,m].
    $$
    Note that under assumption \ref{assumption:existence}, a collision between any two agents will only occur if \textit{both} agents mispredict the other's motion. Let $\mathcal{B}_i$ denote the event that agent $A_i$ mutually mispredicts the motion of any other agent $A_j, j>i$:
    $$
    \mathcal{B}_i = \bigcup_{j=i+1}^N\mathcal{A}_{ij}.
    $$
    Any collision in the workspace will include at least one pairwise collision between two agents, so we can bound the probability of any collision in the workspace by
    $$
    \mathbb{P}(collision) \leq \mathbb{P}(\bigcup_{i=1}^N \mathcal{B}_i).
    $$
    We then proceed as follows:
    $$
    \mathbb{P}(\mathcal{B}_i) = \mathbb{P}(\bigcup_{j=i+1}^N \mathcal{A}_{ij}) = 1 - \mathbb{P}(\bigcap_{j=i+1}^N \mathcal{A}^C_{ij})
    $$
    $$
    = 1 - \prod_{j=i+1}^N \mathbb{P}(\mathcal{A}^C_{ij}) = 1 - (1-(1-\theta)^2)^{N-i}
    $$
    since each $\mathcal{A}_{ij}$ is independent, and following Proposition \ref{prop:two_agent}.
    
    Note that $\{\mathcal{B}_i\}_{i=1}^N$ are mutually independent, since each is composed of the union of different sets of mutually independent events ($\mathcal{A}_{ij}$). We can then write
    $$
    \mathbb{P}(collision) \leq \mathbb{P}(\bigcup_{i=1}^N\mathcal{B}_i) 
    = 1 - \mathbb{P}(\bigcap_{i=1}^N\mathcal{B}_i^C)
    $$
    $$
    = 1 - \prod_{i=1}^N\mathbb{P}(\mathcal{B}_i^C) 
    = 1 - (2\theta - \theta^2)^{\frac{1}{2}(N-1)N}
    $$
\end{proof}

\section{Simulation}\label{sec:simulation}

\subsection{Dynamic Obstacle Motion Prediction}

\begin{figure}
    \centering
    \begin{subfigure}{0.18\textwidth}
        \centering
        \includegraphics[width=\textwidth]{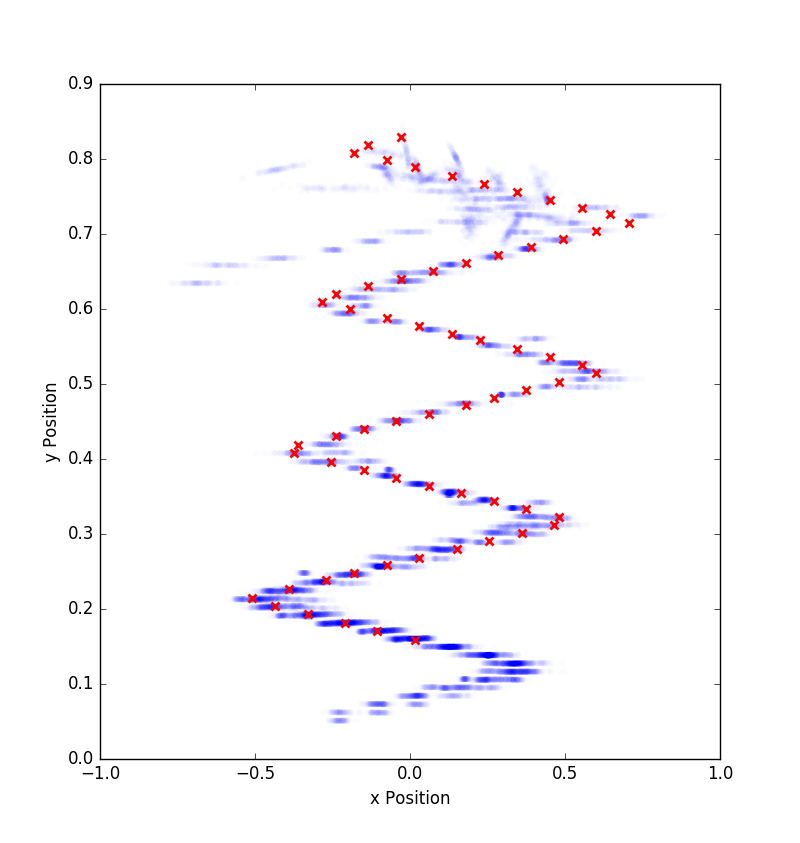}
        \caption{Oscillating}
        \label{fig:online_realtime}
    \end{subfigure}
    \begin{subfigure}{0.25\textwidth}
        \centering
        \includegraphics[width=\textwidth]{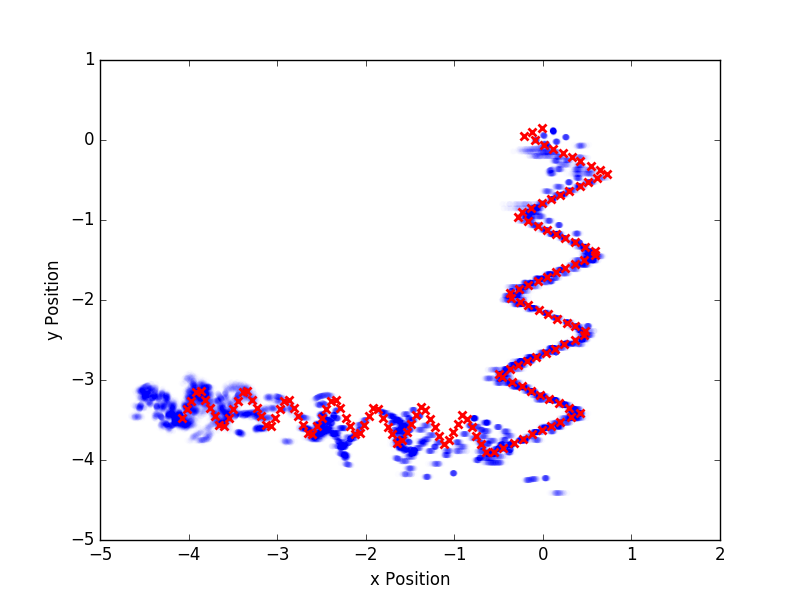}
        \caption{Changing behavior.}
        \label{fig:change_behavior}
    \end{subfigure}
    \begin{subfigure}{0.22\textwidth}
        \centering
        \includegraphics[width=\textwidth]{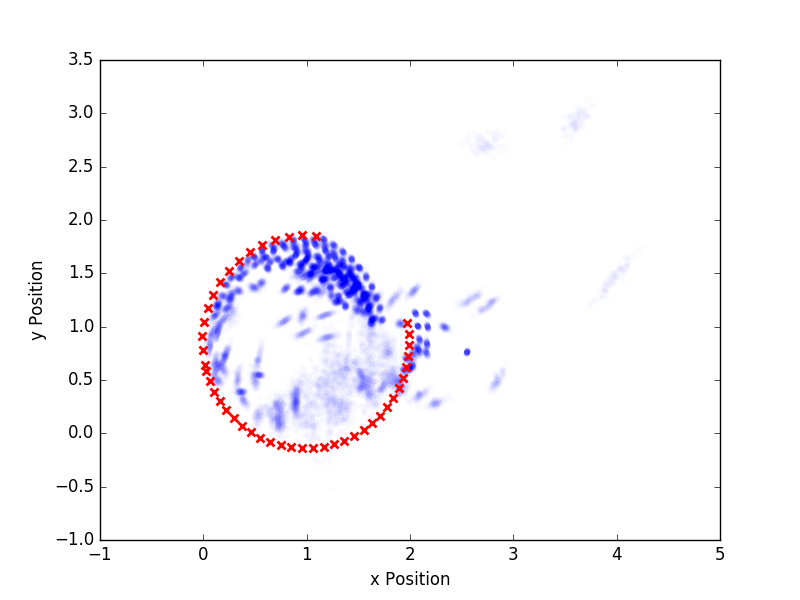}
        \caption{Circle: initial}
        \label{fig:circle_init}
    \end{subfigure}
    \begin{subfigure}{0.22\textwidth}
        \centering
        \includegraphics[width=\textwidth]{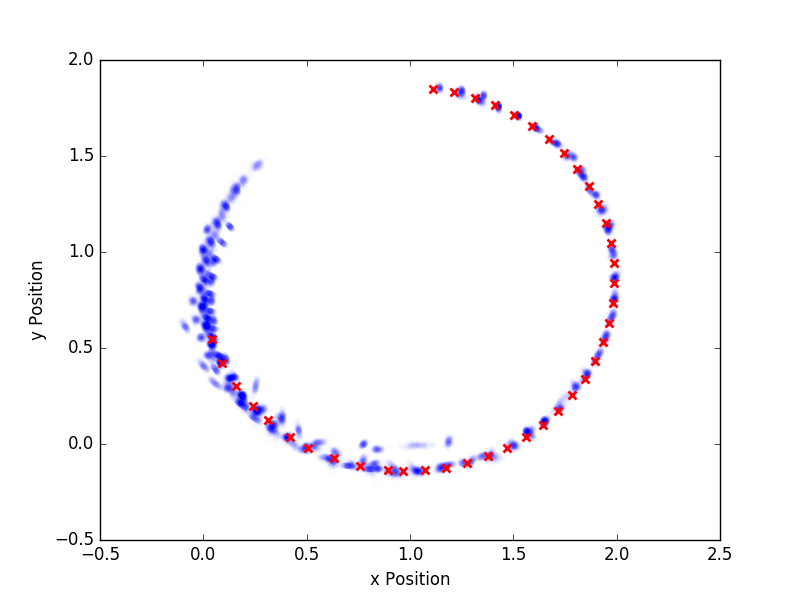}
        \caption{Circle: after $\sim 10$s}
        \label{fig:circle_final}
    \end{subfigure}
    \caption{Predictions of several obstacle motion patterns. Red x's indicate actual obstacle position, blue o's are sample predictions. }
    \label{fig:oscillator_pred}
\end{figure}

We implemented the motion prediction system described in Section \ref{sec:pred} in Python using Tensorflow, ROS, and the Stage simulator \cite{vaughan2008massively}. We use an LSTM network with a hidden layer size of 20, dropout probability $p=0.9$, $N_{iter} = 100$ and a learning rate of 0.003 to predict 10 steps into the future ($m=10$). Note that these hyperparameter values are not optimal, and instead serve as a proof of concept of this methodology. Future work will focus on using Bayesian Optimization \cite{snoek2012practical} to find optimal hyperparameters. The resulting predictions are shown for several patterns of obstacle motion in Figure \ref{fig:oscillator_pred}. 

We compare this system to Gaussian Process (GP) regression using a Matern Kernel \cite{nguyen2009model} and a simple RNN with the same number of hidden layers, dropout probabilities, $N_{iter}$, and learning rate. Our system significantly outperforms the traditional GP approach and offers some improvement over a simple RNN, as shown in Figure \ref{fig:pred_comparison}. 

All predictions were made in real time (2Hz) on a laptop with an Intel i7 processor and 32GB RAM. Code to reproduce these experiments is available at {\small\texttt{\url{https://github.com/vincekurtz/rnn_collvoid}}}.

\begin{figure}
    \centering
    \begin{subfigure}{0.15\textwidth}
        \centering
        \includegraphics[width=\textwidth,height=4cm]{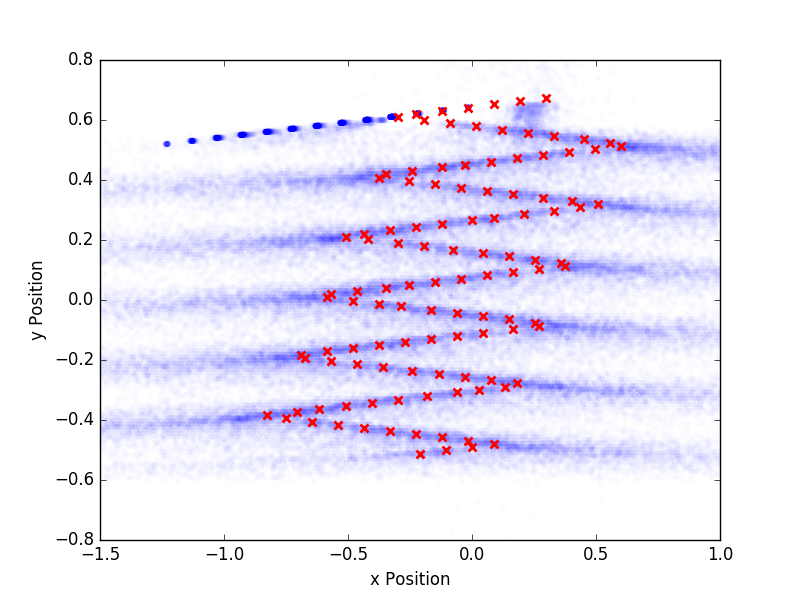}
        \caption{Gaussian Process}
        \label{fig:gp_pred}
    \end{subfigure}
    \begin{subfigure}{0.15\textwidth}
        \centering
        \includegraphics[width=\textwidth,height=4cm]{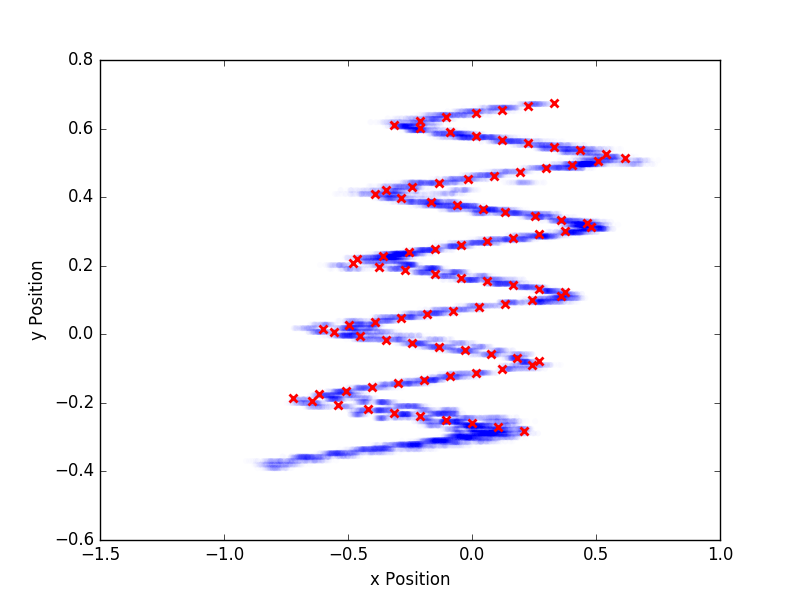}
        \caption{Simple RNN}
        \label{fig:simple_rnn_pred}
    \end{subfigure}
    \begin{subfigure}{0.15\textwidth}
        \centering
        \includegraphics[width=\textwidth,height=4cm]{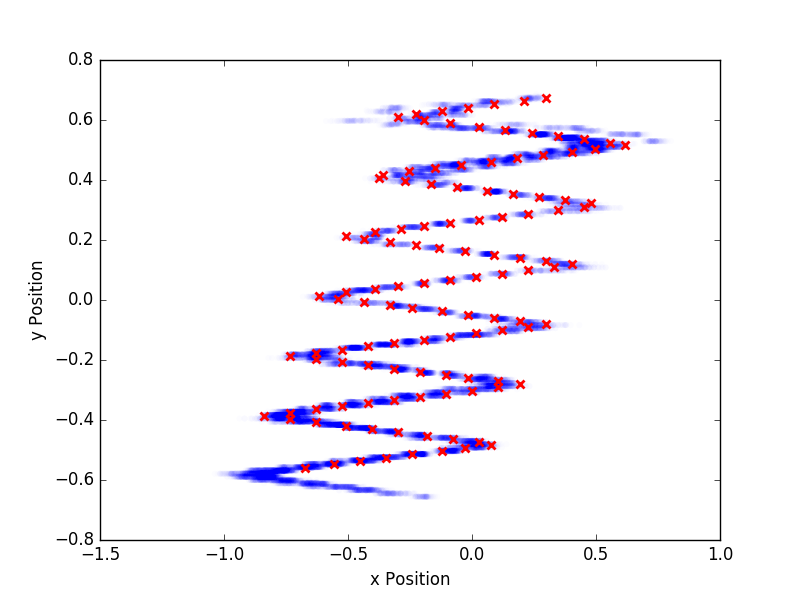}
        \caption{LSTM}
        \label{fig:lstm_pred}
    \end{subfigure}
    \caption{Predicting the motion of a moving obstacle: GP regression fails to accurately predict oscillations, while LSTM demonstrates slightly higher precision than RNN, especially around sharp corners.}
    \label{fig:pred_comparison}
\end{figure}

\subsection{NPVO Controller}

We implemented the NPVO controller described in Section \ref{sec:npvo} in simulation as well, using the \textit{SciPy} \cite{scipy} minimization toolkit to find $\mathbf{v}_{safe} \notin NPVO_{A \mid B}$. Results for several illustrative examples are shown in Figure \ref{fig:controller_success}. Importantly, our implementation was able to avoid a simple obstacle that state-of-the-art methods failed to avoid (Figure \ref{fig:motion}).

\begin{figure}
    \begin{subfigure}{0.40\textwidth}
        \centering
        \includegraphics[width=\textwidth]{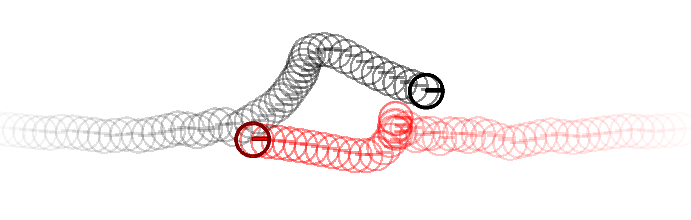}
        \caption{Two agents, $\gamma=0.99$}
        \label{fig:two_agents}
    \end{subfigure}
    \begin{subfigure}{0.21\textwidth}
        \centering
        \includegraphics[width=\textwidth]{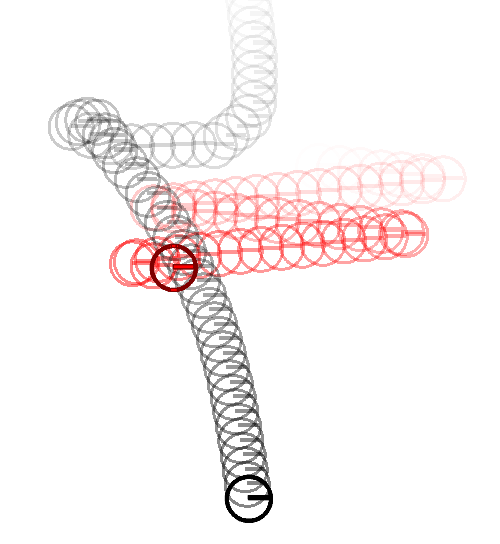}
        \caption{$\gamma=0.5$}
        \label{fig:small_theta}
    \end{subfigure}
    \begin{subfigure}{0.21\textwidth}
        \centering
        \includegraphics[width=\textwidth]{tenstep_theta099.png}
        \caption{$\gamma=0.99$}
        \label{fig:large_theta}
    \end{subfigure}
    \caption{A higher threshold $\gamma$ leads to greater deviation from the desired trajectory, but avoids obstacles by a larger margin.}
    
    \label{fig:controller_success}
\end{figure}

\section{Conclusion}\label{sec:conclusion}

We presented a novel method for predicting the motion of dynamic obstacles from past observations. We introduced the NPVO as a generalization of existing velocity obstacle concepts and demonstrated that an NPVO controller with our prediction system avoids moving obstacles that existing collision avoidance algorithms cannot. We used statistical model checking to (approximately) verify that actual motion remains within certain bounds with a given probability. Finally, we demonstrated the safety and scalability of our approach by proving upper bounds on the probability of collision in multi-agent and reciprocal collision avoidance scenarios.

Future work will focus on characterizing the tradeoff between better uncertainty estimates arsing from additional perception noise and the resulting decreased performance, using Bayesian Optimization techniques to find optimal hyperparameter values, and quantifying the scalability of our approach in scenarios with many obstacles.

%% Use plainnat to work nicely with natbib. 
\bibliographystyle{plainnat}
\bibliography{references}

\end{document}